\definecolor{darkgreen}{rgb}{0.0, 0.5, 0.0}
\title{\algname: Streaming Graph Clustering with Multi-Stage Refinement}
\author{Adil {Chhabra}}{Heidelberg University, Germany }{adil.chhabra@informatik.uni-heidelberg.de}{https://orcid.org/0009-0009-5726-9389}{}
\author{Shai {Dorian Peretz}}{Heidelberg University, Germany}{dorian.peretz@stud.uni-heidelberg.de}{https://orcid.org/0009-0007-2456-9073}{}
\author{Christian {Schulz}}{Heidelberg University, Germany}{christian.schulz@informatik.uni-heidelberg.de}{https://orcid.org/0000-0002-2823-3506}{}
\authorrunning{A. Chhabra et al.} %
\keywords{graph clustering, community, streaming, online, memetic, evolutionary} %
\definecolor{shade1}{gray}{0.95} %
\definecolor{shade2}{gray}{0.99} %
\newcommand{\algname}{\textsc{CluStRE}}
\newcommand{\sodass}{\,:\,}
\newcommand{\setGilt}[2]{\left\{ #1\sodass #2\right\}}
\DeclareMathOperator*{\argmax}{argmax}
\def\MdR{\ensuremath{\mathbb{R}}}
\newcommand{\instance}[1]{\texttt{#1}}
\begin{document}
\nolinenumbers

\maketitle

\begin{abstract}
We present \algname, a novel streaming graph clustering algorithm that balances computational efficiency with high-quality clustering using multi-stage refinement. Unlike traditional in-memory clustering approaches, \algname~processes graphs in a streaming setting, significantly reducing memory overhead while leveraging re-streaming and evolutionary heuristics to improve solution quality. Our method dynamically constructs a quotient graph, enabling modularity-based optimization while efficiently handling large-scale graphs. We introduce multiple configurations of \algname~to provide trade-offs between speed, memory consumption, and clustering quality. Experimental evaluations demonstrate that \algname~improves solution quality by 89.8\%, operates 2.6$\times$ faster, and uses less than two-thirds of the memory required by the state-of-the-art streaming clustering algorithm on average. Moreover, our strongest mode enhances solution quality by up to 150\% on average. With this, \algname~achieves comparable solution quality to in-memory algorithms, i.e. over 96\% of the quality of clustering approaches, including \textsc{Louvain}, effectively bridging the gap between streaming and traditional clustering methods. 
\end{abstract}

\section{Introduction}
\label{sec:intro}

Graph clustering or community detection is the problem of identifying densely connected regions of a graph. The potential applications of graph clustering are vast, as nearly all systems with interacting or coexisting entities can be represented as graphs. Common applications include gaining insights into voter behavior, the emergence of trends, terrorist group formation and recruitment~\cite{SCHAEFFER200727} or the natural partitioning of data records onto pages~\cite{diwanclustering}, as well as analyzing protein interactions~\cite{lealprotein}, gene expression networks~\cite{yinggene}, fraud detection~\cite{Akoglu2015}, program optimization~\cite{demme2012,mcfarling89}, and epidemic spread~\cite{Newman2003}. 

As ground-truth communities are not known in real-world networks, clustering algorithms often model the quality of graph clustering using \emph{modularity} as an objective function. Modularity measures the density of links inside communities as compared to links between communities, while accounting for random chance~\cite{Girvan2002,Newman2006}. Intuitively, we optimize for modularity to identify densely connected regions of the network. Modularity is one of the most widely used objective functions for graph clustering and has been shown to be an effective objective function to optimize for~\cite{Newman2004}. However, modularity optimization is \textit{strongly} NP-complete~\cite{brandes2007modularity} and thus approximation and heuristic algorithms are used in practice. 

State-of-the-art clustering algorithms, such as \textsc{Louvain}~\cite{louvain} and \textsc{VieClus}~\cite{vieclus}, use heuristics to optimize for modularity in their objective function. These as well as other in-memory algorithms operate by loading the entire graph in memory. While these \emph{in-memory} approaches achieve high solution quality by leveraging complete global information to compute clusters, they come with significant memory overhead due to storing the entire edge set in memory. However, there is a growing need to process massive graphs with limited computational resources, particularly in real-time applications like online social network monitoring, fraud detection in financial networks, and dynamic routing in transportation systems~\cite{chhabra2024partitioningtrillionedgegraphs}. Reducing the memory requirements of graph clustering not only addresses these challenges but also significantly lowers monetary costs, enabling large-scale clustering on \hbox{small, cost-effective machines.}

\emph{Streaming} algorithms offer a scalable alternative to \emph{in-memory} graph clustering but often sacrifice solution quality. Instead of loading the entire edge set, these algorithms process nodes or edges sequentially, and assign nodes to clusters immediately. This approach significantly reduces memory overhead compared to \emph{in-memory} clustering algorithms but results in lower solution quality due to the lack of global graph knowledge. While streaming algorithms have been extensively studied for the related problem of graph partitioning~\cite{HeiStreamEdge,HeiStream,tsourakakis2014fennel}, their potential for graph clustering remains under-explored. Hollocou et al.~\cite{hollocou} propose a one-pass streaming clustering algorithm, but research on improving solution quality through techniques like \emph{re-streaming} or leveraging partial global information is limited. State-of-the-art re-streaming~\cite{nishimura2013restreaming} and buffered streaming~\cite{HeiStream,cuttana} graph partitioners demonstrate the potential of these methods to enhance solution quality in streaming scenarios. Therefore, there is considerable scope to develop a streaming clustering algorithm that reduces memory overhead while limiting the compromise in solution quality.

\paragraph*{Contributions} 
\begin{itemize}
	\item To address the need for a high-quality graph clustering algorithm with low memory overhead, we propose \algname, a graph \textbf{Clu}stering algorithm in a \textbf{St}reaming setting with multi-stage refinement using \textbf{R}e-streaming and \textbf{E}volutionary heuristics to incorporate partial global information. \algname~processes the graph in a node stream, dynamically constructing a quotient graph to refine clustering and optimize modularity using global information.
	\item We provide four configurations of \algname, which offer solution quality and resource consumption trade-offs. 
	\item Through experimental analysis, we demonstrate that our lightest mode achieves 89.8\% higher solution quality, runs 2.6$\times$ faster, and uses only 58.8\% of the memory required by the current state-of-the-art streaming graph clustering algorithm. Our strongest mode improves solution quality by up to 150\% on average over the state-of-the-art, setting a new benchmark. 
	\item We show the effect of \algname~'s modularity optimization in predicting ground truth communities of real world networks. \algname~improves the NMI score of ground-truth community recovery by 17\% on average over the state-of-the-art streaming graph clustering algorithm. 
\end{itemize}

\section{Preliminaries}
\label{sec:preliminaries}
\begin{figure*}
	\centering
	\includegraphics[width=0.3\textwidth]{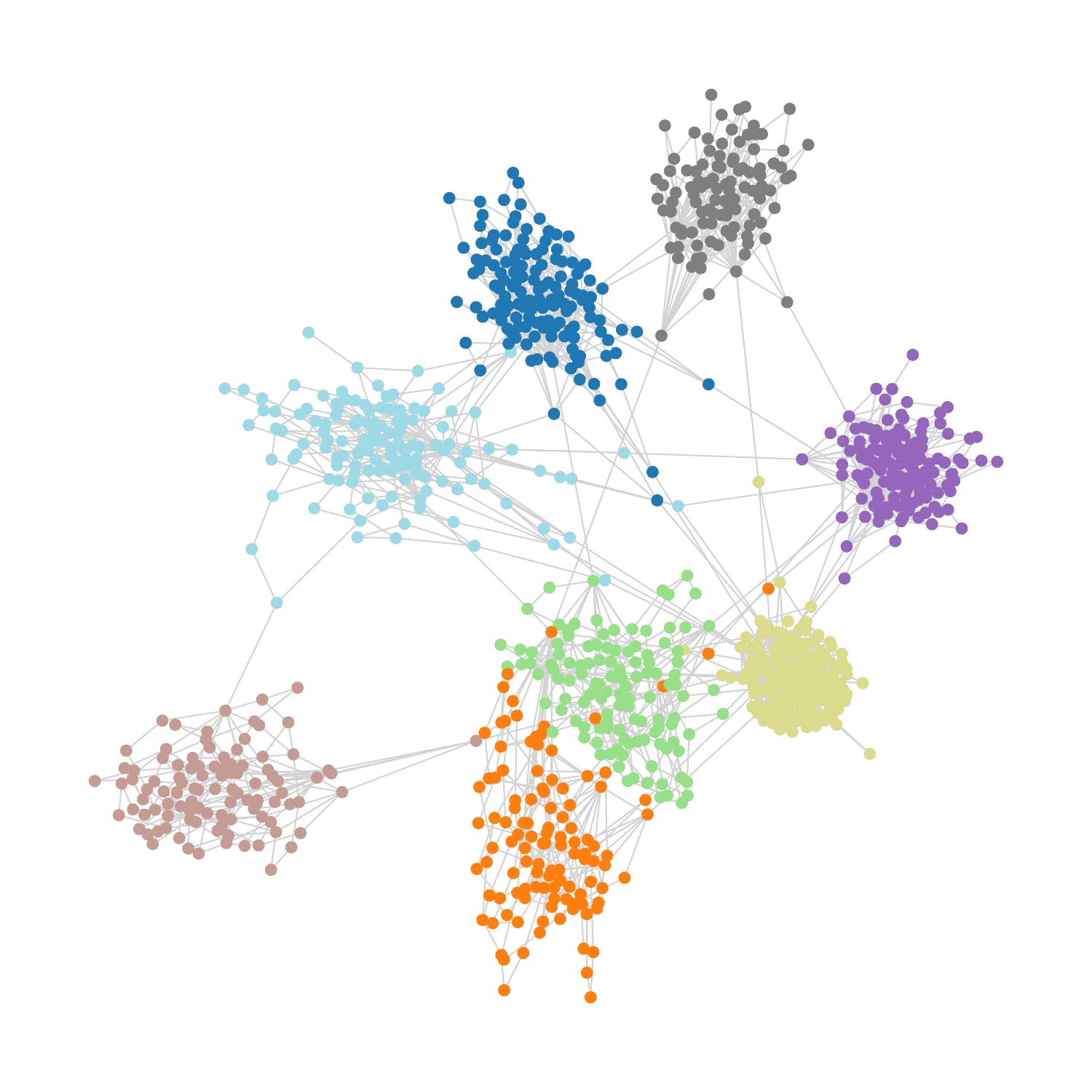} %
	\caption{A clustering (colored nodes) of a network into densely interconnected regions.}
	\label{fig:cluster_graph} 
\end{figure*}
\subsection{Basic Concepts}
\label{subsec:basic_concepts}

\subparagraph*{Graphs.}
Let $G=(V=\{0,\ldots, n-1\},E)$ be an \emph{undirected graph} with no multiple or self-edges, such that $n = |V|$, $m = |E|$.
Let $c: V \to \MdR_{\geq 0}$ be a node-weight function, and let $\omega: E \to \MdR_{>0}$ be an edge-weight function.
We generalize $c$ and $\omega$ functions to sets, such that $c(V') = \sum_{v\in V'}c(v)$ and $\omega(E') = \sum_{e\in E'}\omega(e)$.
An edge $e = (u, v)$ is said to be \emph{incident} on nodes $u$ and $v$. Let $N(v) = \setGilt{u}{(v,u) \in E}$ denote the neighbors of $v$. 
Let $d(v)$ be the degree of node $v$ and $d_w(v)$ be the weighted degree of a node $d_w(v) = \sum_{u \in N(v)} w(v, u)$.  
A graph $S=(V', E')$ is said to be a \emph{subgraph} of $G=(V, E)$ if $V' \subseteq V$ and $E' \subseteq E \cap (V' \times V')$.

\subparagraph*{Graph Clustering.}
The problem addressed in this paper is the \emph{graph clustering problem}. A clustering $\mathcal{C}$ is defined as a partition of the node set $V$ into a set of disjoint \textit{blocks/clusters} $C_1,\ldots,C_k$ such that $C_1 \cup \ldots \cup C_k = V$, as depicted in Figure~\ref{fig:cluster_graph}. Importantly, the number of clusters $k$ is typically not predetermined. A clustering is considered \textit{trivial} if it consists of only a single cluster containing all nodes or if all clusters are singletons (i.e., contain only one node). 
Each cluster $C_i$ can be identified with the subgraph of $G$ induced by its nodes. 
The set of \textbf{intra-cluster edges} is denoted as $E(C) := E \cap \bigcup_{i} (C_i \times C_i)$. These are edges with both endpoints in the same cluster $C_i$. The \textbf{inter-cluster edges} are given by $E \setminus E(C)$ which is the set of edges whose endpoints are in different clusters.
Various objective functions for graph clustering are proposed in the literature, which are discussed in the following paragraph. 

\subparagraph*{Objective Functions}
Several measures assess clustering quality, including \emph{coverage}~\cite{Gaertler2005}, \emph{performance}~\cite{Gaertler2005}, \emph{inter-cluster conductance}~\cite{kannan2004clusterings}, \emph{surprise}~\cite{arnau2004iterative}, \emph{map equation}~\cite{rosvall2009map}, and \emph{modularity}~\cite{newman2004finding}. We refer the reader to the given references for further insight into these objective functions.
Of these, \emph{coverage}, the fraction of intra-cluster edges, is a simple but flawed metric as it favors trivial solutions. \emph{Modularity} mitigates this by comparing observed coverage to its expected value under a random model.  In this work, we focus on the modularity objective function as it is a widely accepted clustering quality function~\cite{dimacschallengegraphpartandcluster,licommunitydetectionsurvey24}. \hbox{It is formally defined as:}
\begin{equation}
	\label{eq:modularity}
	Q(C) = \frac{1}{m} \sum_{C_i \in C} \left( K_{C_i \rightarrow C_i} - \frac{\text{vol}(C_i)^2}{2m} \right)
\end{equation}
where $K_{C_i \rightarrow C_i}$ is the total weight of intra-cluster edges in $C_i$, and $\text{vol}(C_i) = \sum_{v \in C_i} d_w(v)$ is the weighted volume of $C_i$. This formulation aligns with our modularity gain computation in Section~\ref{subsec:mod_opt}, where modularity optimization is performed iteratively using streaming updates.

\subparagraph*{Streaming Computation Model.}
Streaming algorithms typically follow a \emph{load-compute-store} logic. The most widely used streaming model is the one-pass model, which loads nodes along with their neighborhoods (in node streams) or edges (in edge streams) and immediately assigns nodes to clusters.
Initially, each node is assumed to be assigned to a singleton cluster. When the node arrives in the data stream, a scoring function computes the change in score when the node is moved from its current cluster to each of the clusters to which its neighbors are assigned. The scoring function indicates how well a node is connected to a cluster by examining cluster assignments of neighbors of the current node. The node is then assigned to the cluster that maximizes the \hbox{gain in score.} One configuration of our algorithm uses this one-pass streaming model to permanently assign nodes, however, our multi-stage refinement approach enables the re-assignment of nodes into clusters after the initial streaming, based on updates made to optimize the clustering. The current state-of-the-art streaming clustering algorithm by Hollocou et al.~\cite{hollocou} also reassigns clusters during its edge streaming process.

\subparagraph*{Memetic Algorithms.}
\emph{Evolutionary} or \emph{memetic} algorithms use population-based heuristics to mimic natural evolution to optimize solutions. We utilize the state-of-the-art memetic graph clustering algorithm \textsc{VieClus}~\cite{vieclus} within our approach. We chose \textsc{VieClus} because it outperforms the previous benchmark result of the 10th DIMACS graph clustering challenge in 98 out of 115 runs of the algorithm~\cite{vieclus}. Memetic algorithms like \textsc{VieClus} identify or generate possible solutions to a problem, and then recombine them to produce new and diverse solutions with higher quality, while also introducing random solutions to explore search space and escape local optima. This process is likened to natural evolution, where the initial solutions are termed \textit{individuals} within a \textit{population}, which are then recombined to form the improved solutions, termed \textit{offspring}, with random solutions generated through mutation. Recombination exploits characteristics of previous solutions to enable improvement in solution quality, and mutation fosters exploration by introducing random variability and preventing premature convergence. In memetic clustering, additional clustering solutions are generated through recombination and mutation to select the clustering that optimizes solution quality.

\subsection{Related Work}
\label{subsec:related_work}
There has been a significant amount of research on graph clustering; we refer the reader to~\cite{FORTUNATO201075,HamannMapEq,watteau2024advancedgraphclusteringmethods} for thorough reviews of contributions in this field. Here, we survey some graph clustering algorithms relevant to our contribution. 

Several successful algorithms have been developed for prominent applications of graph clustering. The \textsc{Louvain} method, introduced by Blondel et al.~\cite{louvain}, is a multi-level clustering algorithm that optimizes modularity as its objective function. The \textsc{Leiden} algorithm~\cite{Traag2019} provides adaptations to the \textsc{Louvain} method. \textsc{VieClus} also optimizes modularity as its objective function, but uses the heuristics of evolutionary algorithms to tackle the graph clustering problem~\cite{vieclus}.     
Spectral clustering~\cite{vonLuxburg2007} partitions graph data into clusters using the eigenvalues and eigenvectors of a similarity matrix and, along with $k$-means/$k$-median/$k$-center methods~\cite{JIANG2023691,wanclust2017}, is particularly effective when the desired number of clusters is predefined. Stochastic Block Models (SBM)~\cite{JIANG2023691,Lee2019} are probabilistic models that analyze graphs with underlying (latent) structures, where nodes are grouped into blocks, and edge density between blocks is determined by probability distribution. Markov Clustering (MCL)~\cite{dongen2008} efficiently clusters graphs by leveraging random walks and Markov chain properties to identify densely connected portions of a graph. \textsc{DBSCAN}~\cite{apachegiraph} and correlation clustering~\cite{Bansal2004} excel at detecting anomalously dense clusters. 
Additionally, several distributed graph clustering algorithms have been developed, including frameworks like \textsc{Pregel}/\textsc{Giraph}~\cite{pregel,apachegiraph} and \textsc{MapReduce}~\cite{mapreduce}. \textsc{TeraHAC}~\cite{terahac} is a distributed algorithm for hierarchical clustering that addresses the runtime bottlenecks of other hierarchical clustering algorithms. Some recent approaches to graph clustering draw on deep learning and graph neural networks~\cite{ijcai2020p693,liu2023surveydeepgraphclustering,su2024surveydeeplearningcommunity,wangdeepgraphnodeclust2024}.

While there are several successful approaches to the graph clustering problem, most state-of-the-art clustering algorithms are in-memory algorithms, with limited research undertaken on streaming graph clustering. Hollocou et al.~\cite{hollocou} introduce a streaming graph clustering algorithm that reads edge streams and assigns nodes to clusters on-the-fly. When streaming an edge $e=(u,v)$, they either (a) assign $u$ to the cluster of $v$, (b) assign $v$ to the cluster of $u$ or (c) leave both in their respective cluster, attempting to optimize for modularity. Although this algorithm requires very few computational resources compared to state-of-the-art in-memory algorithms, its solution quality is lower due to the absence of global knowledge of the graph and further impacted by sub-optimal cluster assignment decisions. Of related interest, Assadi et al.~\cite{assadi2022hierarchical} present a theoretical proof for the runtime and memory complexity of a streaming algorithm for the distinct hierarchical clustering problem.

\section{\algname: Streaming Graph Clustering with Memetic Refinement and Re-Streaming Local Search}
\label{sec:main_contribution}
In this section, we present our algorithm \algname. First, we provide an overview of \algname~'s light-weight streaming approach to graph clustering. Subsequently, we detail the steps of our approach, namely optimizing for modularity in a streaming setting (Section~\ref{subsec:mod_opt}) and the optional refinements: on-the-fly construction of a dynamic quotient graph model for memetic graph clustering (Section~\ref{subsec:memetic}), and re-streaming with local search (Section~\ref{subsec:restream}). 

\subsection{Overall Algorithm}
The \algname~algorithm addresses the quality limitations of one-pass streaming by incorporating global knowledge through memetic clustering of a quotient graph model and re-streaming with local search to optimize modularity. The algorithm processes the input graph $G$ node by node, loading the neighborhood $N(v)$ of only a single node $v$ into memory at a time. For each node $v$, we compute the cluster $C^*$, accounting for the clusters assigned to $N(v)$, that yields the highest modularity gain, and assign $v$ to that cluster.
Optionally, as each node is assigned, we dynamically construct a quotient graph $G_Q$, where each node represents a cluster in $G$, and edges model inter-cluster edges in $G$, supplemented by self-loops to represent intra-cluster edges. 
This quotient graph $G_Q$ is designed such that the modularity of any clustering of $G_Q$ is equivalent to the modularity of that clustering of the original graph $G$ (proven in Appendix Theorem~\ref{thm:modularity_invariance}). To further improve the quality of the clustering, a memetic graph clustering algorithm is run on the computed quotient graph $G_Q$. This allows us to split and merge clusters and drastically increases the search space to optimize modularity.
Once memetic clustering on the quotient graph is finished, the set of clusters with the highest modularity computed on $G_Q$ is used to update the clustering of the original graph. At this stage, the algorithm either outputs the resulting clusters or re-streams the graph to perform additional local search operations, iterating until modularity gain converges to the local optimum. The overall structure of \algname~is outlined in Algorithm~\ref{alg:overall}, with detailed explanations \hbox{provided in the subsequent sections.}

\begin{algorithm}[t]
	\caption{\algname: Overall Approach for Streaming Graph Clustering}
	\label{alg:overall}
	\begin{algorithmic}[1]
		\State \textbf{Input:} Graph $G = (V, E)$, \textcolor{blue}{Flags: \textbf{$refineG_Q$}}, \textcolor{darkgreen}{\textbf{$restreamLS$}}
		\State \textbf{Output:} Clustering of nodes $\mathcal{C}$
		
		\State Init array $\mathcal{C}[v] = v$ $\forall v \in V$ \Comment{\textcolor{gray}{singleton clustering $\Theta(n)$ memory}}
		\State Init empty quotient graph $G_Q$  \Comment{\textcolor{gray}{$\mathcal{O}(|E_{G_Q}|)$ memory}}
		
		\For{\textbf{each} node $v \in V$ (node stream)}
		\State $\mathcal{C}[v] = C^*$ $\gets$ \Call{computeCluster}{$v, \mathcal{C}$} \Comment{\textcolor{gray}{cluster maximizing modularity gain}}
		\If{\textcolor{blue}{\textbf{$refineG_Q$}}}
		\State \Call{updateQuotientGraph}{$v, \mathcal{C}, G_Q$} \Comment{\textcolor{gray}{on-the-fly $G_Q$ construction}}
		\EndIf
		\EndFor
		
		\If{\textcolor{blue}{\textbf{$refineG_Q$}}}
		\State \Call{refineClustering}{$\mathcal{C}, G_Q$} \Comment{\textcolor{gray}{memetic graph clustering refinement}}
		\EndIf
		
		\If{\textcolor{darkgreen}{\textbf{$restreamLS$}}} 
		\State \Call{restreamLocalSearch}{$\mathcal{C}$}
		\EndIf
		
		\State \Return $\mathcal{C}$
	\end{algorithmic}
\end{algorithm}

\subsection{One-Pass Streaming with Modularity Gain Scoring}
\label{subsec:mod_opt}
Our streaming algorithm processes nodes sequentially along with their neighborhoods, assigning them to clusters on-the-fly. At first, we assume that each node is part of its own singleton cluster. 
As nodes stream in, each node $v$ is assigned to the cluster $C^*$ that maximizes modularity gain, $\Delta Q_{v: C_{cur} \rightarrow C_{can}}$, computed using Equation~\ref{eq:delta_mod} (also used in the \textsc{Louvain} method~\cite{louvain}).
\begin{equation}
	\label{eq:best_cluster}
	C^* = \argmax_{C_{can} \in C(N(v))} \Delta Q_{v: C_{cur} \rightarrow C_{can}}
\end{equation}
The delta modularity function quantifies the change in modularity when node $v$ moves from its current cluster $C_{cur}$ to a candidate cluster $C_{can}$. For a given node $v$, we identify a set of candidate clusters $C(N(v))$ as the clusters to which neighbors of $v$ have been assigned. If all candidate clusters decrease modularity, we do not move the node and instead retain the singleton cluster for $v$. 
Formally, we compute delta modularity with Equation~\ref{eq:delta_mod}.
\begin{equation}
	\label{eq:delta_mod}
	\Delta Q_{v: C_{cur} \rightarrow C_{can}}
	= \frac{1}{m} (K_{v \rightarrow C_{can}} - K_{v \rightarrow C_{cur}})
	- \frac{d_w(v)}{2m^2} (d_w(v) + \text{vol}(C_{can}) - \text{vol}(C_{cur}))
\end{equation}
where we define $K_{v \rightarrow C}$ as the total weight of edges between node $v$ and its neighbors in cluster $C$ and $\text{vol}(C)$ to be the weighted volume of a cluster $\sum_{v \in C_i} d_w(v)$.
We compute the best cluster for each node in $O(\text{deg}(v))$ time, resulting in an overall streaming pass time complexity of $O(n\Delta)$, where $\Delta$ is the maximum degree of the graph. This is achieved by maintaining an array of cluster assignments for each node of size $\theta(n)$ and an array of cluster volumes of size $|C|$, i.e., total number of clusters. These structures are updated incrementally as nodes are processed. The modularity maximization approach is reminiscent of the first phase of the \textsc{Louvain} method for graph clustering; however, unlike the \textsc{Louvain} method, we visit each node only once and in the node stream order. 
\subsection{Modularity Refinement via Memetic Clustering}
\label{subsec:memetic}
\algname~optionally applies memetic graph clustering after streaming to enhance solution quality by integrating partial global information and evolutionary heuristics. This process maintains a dynamic quotient graph $G_Q$ during streaming, which then serves as the input for memetic clustering. This approach is similar to the \textsc{Louvain} method, which also contracts the initial clustering into a quotient graph; however, unlike the \textsc{Louvain} method we construct the quotient graph on the fly since we do not store all edges of the graph in-memory but stream them. Moreover, we only contract the clustering once before continuing with the memetic approach. Here, we apply the state-of-the-art memetic graph clustering algorithm used in \textsc{VieClus}~\cite{vieclus}. We detail the quotient graph construction and give a brief overview of the techniques used in the memetic clustering approach of \hbox{\textsc{VieClus} in order to be self-contained.}
\begin{figure*}[t]
	\centering
	\includegraphics[width=0.7\textwidth]{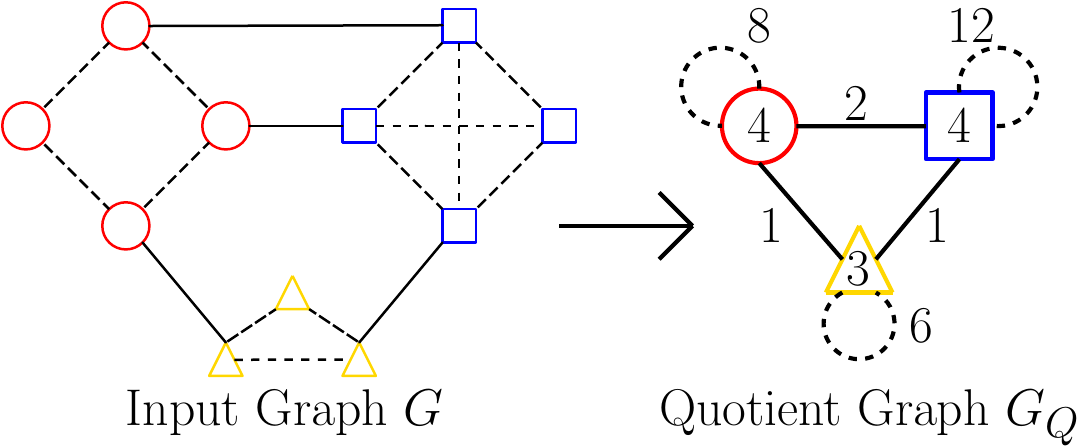} %
	\caption{A quotient graph $G_Q$ constructed from an undirected, unweighted toy graph $G$ (all edges have unitary weight). Here, clusters are represented by unique colors and shapes, thick lines show inter-cluster edges and dashed lines show intra-cluster edges. In the quotient graph, each cluster is contracted into a \textit{supernode} with weight equal to number of nodes in that cluster in $G$. Edges between the nodes of the quotient graph represent inter-cluster edges in $G$, with weight equal to the sum of the weight of the corresponding inter-cluster edges in $G$. Intra-cluster edges in $G$ are represented by weighted self-loops in the quotient graph, counted twice - once for each directed intra-cluster edge.}
	\label{fig:quotient_graph} 
\end{figure*}
In the quotient graph $G_Q$, each cluster $C_i$ of the input graph $G$ is represented as a \textit{supernode} $v'_i$, weighted by the number of assigned nodes in $C_i$. Edges in $G_Q$ are derived from the input graph: an edge between quotient nodes $v'_i$ and $v'_j$ corresponds to inter-cluster edges between $C_i$ and $C_j$ in $G$, weighted by the sum of inter-cluster edge weights between $C_i$ and $C_j$; self-loops from $v'_i$ to itself reflect intra-cluster connectivity, weighted by the sum of weights of intra-cluster edges in $C_i$. We count each undirected intra-cluster edge twice to set $w(v'_i,v'_i) = K_{C_i \rightarrow C_i}$ in the quotient graph. With this construction, the modularity computed on clusterings of the coarser quotient graph is equivalent to the modularity of corresponding clusterings of the input graph, as proven in Appendix Theorem~\ref{thm:modularity_invariance}.
Thus, the quotient graph provides a low-memory representation of the input graph and its clustering, allowing efficient access to partial global information to optimize modularity. 
Given modularity equivalence, by optimizing modularity of a clustering in $G_Q$, we obtain an updated clustering of $G$ with higher modularity. To construct $G_Q$ on the fly, we maintain a hash map of quotient graph edges $E_{G_Q}$, dynamically updated as nodes in $G$ are streamed and assigned to clusters (Algorithm~\ref{alg:update_quotient_graph}). This construction requires $O(E_{G_Q})$ additional memory and takes linear time in the size of $E_{G_Q}$. 
\begin{algorithm}[t]
	\caption{updateQuotientGraph$(v, Q, \mathcal{C})$: On-the-Fly Quotient Graph Construction}
	\label{alg:update_quotient_graph}
	\begin{algorithmic}[1]
		\State \textbf{Input:} Node $v$ (the node to process), Hashmap $Q$ representing quotient graph edges, where $Q[(C_i, C_j)]$ stores edge weights, Array $\mathcal{C}$ of cluster IDs for all nodes
		\State \textbf{Output:} Updated hashmap $Q$
		
		\State $C_i \gets \mathcal{C}[v]$ \Comment{\textcolor{gray}{cluster ID of current node $v$}}
		
		\For{\textbf{each} neighbor $u$ of $v$ with $v<u$} 
		
		\State $C_j \gets \mathcal{C}[u]$ \Comment{\textcolor{gray}{cluster ID of neighboring node $u$}}
		\State $w \gets w(u,v)$
		
		\If{$C_i == C_j$} 
		\State $w \gets 2 \cdot w$ \Comment{\textcolor{gray}{double the weight for self-loops}}
		\EndIf
		
		\If{$(C_i, C_j) \notin Q$ \textbf{or} $(C_j, C_i) \notin Q$}
		\State $Q[(C_i, C_j)] \gets w$ \Comment{\textcolor{gray}{insert new edge with key $(C_i, C_j)$ and value $w$}}
		\Else
		\State $Q[(C_i, C_j)] \gets Q[(C_i, C_j)] + w$ \Comment{\textcolor{gray}{increment edge weight (value)}}
		\EndIf
		
		\EndFor
		
		\State \textbf{return} $Q$
	\end{algorithmic}
\end{algorithm}

\begin{algorithm}[ht]
	\caption{RestreamLocalSearch($\mathcal{C}, X\in[0,1], T_{\text{cutoff}}$)}
	\label{alg:restream}
	\begin{algorithmic}[1]
		\State \textbf{Init:} $ActiveNodes \gets V$, $\Delta Q \gets \infty$, $startTime \gets \Call{currTime}{}$
		\While{$ActiveNodes \neq \emptyset$ \textbf{and} $\Delta Q \geq X \cdot Q_{\text{total}}$ \textbf{and} $\Call{currTime}{} - startTime < T_{\text{cutoff}}$}
		\State $NextActiveNodes \gets \emptyset$, $\Delta Q \gets 0$
		\For{each $v \in ActiveNodes$ \textbf{(node stream)}}
		\State $C^* \gets$ \Call{computeCluster}{$v, \mathcal{C}$}
		\If{$C^* \neq \mathcal{C}(v)$}
		\State Update $\mathcal{C}[v] \gets C^*$
		\State Add neighbors of $v$ to $NextActiveNodes$
		\State $\Delta Q \gets \Delta Q + \Delta Q_v$ \Comment{\textcolor{gray}{$\Delta Q_v$ = modularity gain from moving node $v$}}
		\EndIf
		\EndFor
		\State $ActiveNodes \gets NextActiveNodes$
		\State $Q_{\text{total}} \gets Q_{\text{total}} + \Delta Q$
		\EndWhile
	\end{algorithmic}
\end{algorithm}

We apply the state-of-the-art memetic graph clustering algorithm used in \textsc{VieClus}~\cite{vieclus} on the quotient graph $G_Q$ to iteratively refine clustering through evolutionary heuristics. The process in \textsc{VieClus} begins by generating a diverse population of clustering solutions for $G_Q$ using an adapted \textsc{Louvain} method with label propagation~\cite{sizeconstrainedlabelprop2014}. This initial solution space is used to generate new solutions through iterative rounds of recombination and mutation.

Each round of recombination starts by selecting two sets of clusters with high modularity using a tournament selection rule~\cite{millergenetictournament} from the initialized solution space to serve as input sets to generate a new clustering solution. The input or parent sets are combined using overlay clustering: two nodes from $G_Q$ belong to the same cluster in the overlay clustering if and only if they are clustered together in both input sets. Intuitively, overlay clustering enforces a stricter agreement between input sets; if both inputs independently classify two nodes as belonging to the same cluster, then there is high confidence that they should be grouped together.
\textsc{VieClus} employs two recombination strategies: flat and multilevel recombination. The former contracts the overlay clusters into a new quotient graph and refines them using \textsc{Louvain}, and the latter uses multi-level local search to refine the solution produced by overlay clustering. Both methods use strategies to ensure the quality of the recombined solution is at least as good as that of its input sets~\cite{vieclus}. 

Mutation is incorporated to counteract the reduction in solution diversity caused by recombination, ensuring a broader search space and avoiding local optima. To achieve this, two input sets are selected from the initialized population, and their clusters are split into two balanced blocks using graph partitioning. The modified solutions, now with additional clusters, serve as inputs for multi-level recombination.

After computing the offspring set, we evaluate it against the existing clustering solutions in the initial population. If it has higher or equal modularity than any existing solution, it replaces the most similar of those solutions, i.e., the set with the smallest symmetric difference between their sets of inter-cluster edges, thus maintaining diversity and preventing premature convergence. Otherwise, the offspring set is discarded. The process then repeats, selecting new parent solutions for recombination or mutation. This iterative process continues until a time cutoff is reached, after which the highest-modularity clustering in the population is applied to update the clustering in $G$.

\subsection{Modularity Refinement via Re-Streaming with Local Search}
\label{subsec:restream}
Our algorithm allows for further modularity optimization via re-streaming with local search. The re-streaming step is applicable after an initial clustering is obtained from the first round of streaming, irrespective of whether we build and optimize the quotient graph. In our re-streaming approach, we first process the node stream again, and identify modularity gain movements for each node by using Equation~\ref{eq:delta_mod}. In this re-stream, the change in modularity computation is more informed since higher-quality clusters have already been identified for the nodes in the first pass (instead of assuming they are all in singleton clusters). Here, unlike in the first round of streaming, we don't create new clusters, but merge and split the existing ones for modularity gain. 

While the first re-stream requires processing the entire node stream, subsequent re-streams employ an optimization that significantly reduces I/O time. Specifically, after the first re-stream, we track the neighborhoods of nodes that changed clusters during the process -- these neighborhoods define the \textit{active nodes} for the next iteration. In subsequent re-streams, instead of reading the entire node stream, we only load the \textit{active nodes} from disk, significantly reducing I/O overhead. The set of \textit{active nodes} is updated dynamically in each re-stream, consisting of the neighborhoods of nodes that were reassigned in the previous iteration. This process continues until (a) convergence, i.e., when no further node movements improve modularity and no active nodes remain, or (b) improvement in the round falls below $X\%$ of the modularity score computed so far, where $X$ is an input parameter, or (c) a specified time cut-off for the local search phase is exceeded. A pseudocode of the re-stream algorithm is shown in Algorithm~\ref{alg:restream}.

To prioritize reducing I/O time, we restrict cluster reassignments to those that alter the first component of the modularity gain function (Equation~\ref{eq:delta_mod}). The key intuition is that only the reassignment of nodes whose neighborhoods changed in the previous re-stream can impact this component. When a node’s neighbor is reassigned, it alters the total edge weight between the node and its neighbors in its current cluster, relative to the edge weight between the node and its neighbors in the candidate cluster -- this difference is captured in the first component of the modularity gain function. Conversely, if a node’s neighborhood remains unchanged, reassigning the node has no effect on this component, as the relative total edge weights remain the same. By focusing re-streaming on nodes with changed neighborhoods, we strike a balance between optimizing I/O efficiency and preserving significant updates to the modularity gain score.

\vfill
\section{Experimental Evaluation}
\label{sec:Experimental Evaluation}
Our experimental analysis investigates the following research questions:

\begin{itemize}
	\item \textbf{RQ1:} What is the impact of memetic clustering and re-streaming on the clustering quality, runtime, and memory consumption of \algname?
	
	\item \textbf{RQ2:} How does \algname's solution quality compare to the state-of-the-art streaming clustering algorithm and in-memory clustering methods? Specifically, how much closer does \algname~get to in-memory clustering than existing streaming algorithms?
	
	\item \textbf{RQ3:} How does \algname~balance runtime and memory consumption with solution quality, compared to other streaming and in-memory clustering approaches?
	
	\item \textbf{RQ4:} How well does \algname~recover ground-truth communities compared to existing streaming algorithms?
\end{itemize}

\subparagraph*{Datasets.} 
Our graph instances, listed in Table~\ref{tab:modularity_scores}, are sourced from various benchmark datasets~\cite{benchmarksfornetworksanalysis,BMSB,BRSLLP,BoVWFI,KaGen,snap,nr-aaai15}. To ensure diversity in size and domain, we include road networks and social networks from the SNAP dataset~\cite{snap}, and web graphs from the 10th DIMACS Graph Clustering Implementation Challenge~\cite{benchmarksfornetworksanalysis}. Additional graphs are sourced from the Network Repository~\cite{nr-aaai15} and the Laboratory for Web Algorithmics~\cite{BMSB,BRSLLP,BoVWFI}.
We also include synthetic graphs, such as \texttt{RGG} (Random Geometric Graphs) and \texttt{RHG} (Random Hyperbolic Graphs), which model real-world scale-free networks~\cite{KaGen}.
The ground truth communities in the SNAP dataset cannot be used reliably for evaluation in our study as their communities overlap, i.e., one node can be in multiple ground truth clusters; thus we do not use these datasets to evaluate ground truth recovery.
Instead, to answer \textbf{RQ4}, we incorporate citation and co-purchase networks with ground truth communities from PyTorch Geometric: \instance{CORA}, \instance{CiteSeer}, and \instance{PubMed}~\cite{cora} are citation networks where nodes represent academic papers and edges denote citations, with communities corresponding to research topics; \instance{AmazonCoPurchase}\cite{shchur2019pitfallsgraphneuralnetwork} represents product co-purchase data, where nodes are products and edges indicate frequently bought-together relationships, with communities corresponding to product categories.
For our experiments, all graphs were converted to the METIS node-stream format, with parallel edges, self-loops, and directions removed~\cite{karypis1998fast}. We assigned unit weights to all nodes and edges for consistency in evaluation.

\subparagraph*{Baselines.}
We compare \algname~against the state-of-the-art streaming graph clustering algorithm by Hollocou et al.~\cite{hollocou}, referred to as \textsc{Hollocou}. 
We obtained its C++ implementation from the official GitHub repository but found that it does not support streaming from disk; instead, it first loads the entire edge set into memory before processing edges. To ensure a fair comparison, particularly in terms of memory consumption, we modified their implementation to stream edges directly from disk.
\textsc{Hollocou} requires a parameter, $v_{max}$, which limits the maximum volume of any cluster. The authors provide no guidance on selecting this value or the one used in their experiments. We tested multiple settings, including $v_{max} = n$, and found $v_{max} = 10,000$ to yield the best clustering quality in our setup. Thus, all reported experiments are with $v_{max} = 10,000$.
Additionally, we compare \algname~with the in-memory algorithms \textsc{Louvain}, and \textsc{VieClus}, both obtained from \textsc{VieClus}'s official GitHub repository. We allocate \textsc{VieClus} \hbox{five minutes for evolutionary rounds.}

\subparagraph*{Experimental Setup and Reproducibility.} 
All experiments were performed on a single core of a machine with a sixteen-core Intel Xeon Silver 4216 processor running at $2.1$ GHz, $100$ GB of main memory, $16$ MB of L2-Cache, and $22$ MB of L3-Cache running Ubuntu 20.04.1. %
We implemented \algname~in C++ and compiled it using gcc 13.2.0 with full optimization enabled (-O3 flag). In the study presented here, we configure \algname~by setting a time limit of 15 seconds for the evolutionary rounds in memetic clustering. Additionally, for local search, we set a time limit of 10 minutes and set $X = 0.05$, that is, we stop local search when improvement falls below 5\%. The code for \algname~will be publicly available on acceptance of the paper.

\subparagraph*{Methodology}
We measure running time, solution quality, i.e., modularity scores, and memory consumption, i.e., the maximum resident set size for the executed process.
When averaging over all instances, we use the geometric mean to give every instance the same influence on the final score.
Let the runtime, modularity score, or memory consumption be denoted by the score $\sigma_{A}$ for some clustering generated by an algorithm $A$.
We express this score relative to others using the following tools:
\emph{improvement} over an algorithm $B$, computed as a percentage $(\frac{\sigma_A}{\sigma_B} - 1) * 100 \%$ and
\emph{relative} value over an algorithm $B$, computed as $\frac{\sigma_A}{\sigma_B}$.
Additionally, we present performance profiles by Dolan and Mor{\'e}~\cite{pp} to benchmark our algorithms.
These profiles relate the running time (resp. solution quality, memory) of the slower (resp. worse) algorithms to the fastest (resp. best) one on a per-instance basis.
Their $x$-axis shows a factor $\tau$ while their $y$-axis shows the fraction of instances for which an algorithm has up to $\tau$ times the running time (resp. solution quality, memory) of the fastest \hbox{(resp. best) algorithm}.
For graphs without ground truth communities, we use modularity to measure quality, and on graphs with ground truth communities, we use Normalized Mutual Information (NMI)~\cite{Lancichinetti_2009} to measure similarity with predicted clusters. Both metrics are widely used and regarded as reliable metrics for clustering quality~\cite{dimacschallengegraphpartandcluster,licommunitydetectionsurvey24}. 
	
\subsection{Impact of Re-Streaming and Memetic Clustering}
To answer \textbf{RQ1}, we evaluate the effect of our multi-stage refinement by analyzing four configurations of \algname, which users of our code can toggle between:
\begin{tcolorbox}
	\begin{description}
		\item[\algname-Light:] One-pass Streaming.
		\item[\algname-Light+:] Stream + Re-Streaming Local Search (LS).
		\item[\algname-Evo:] Stream + Memetic Quotient Graph Clustering.
		\item[\algname-Strong:] Stream \hbox{+ Memetic Quotient Graph Clustering + Re-Streaming LS.}
	\end{description}
\end{tcolorbox}
\begin{figure}[t]
	\centering
		\includegraphics[width=1.0\textwidth]{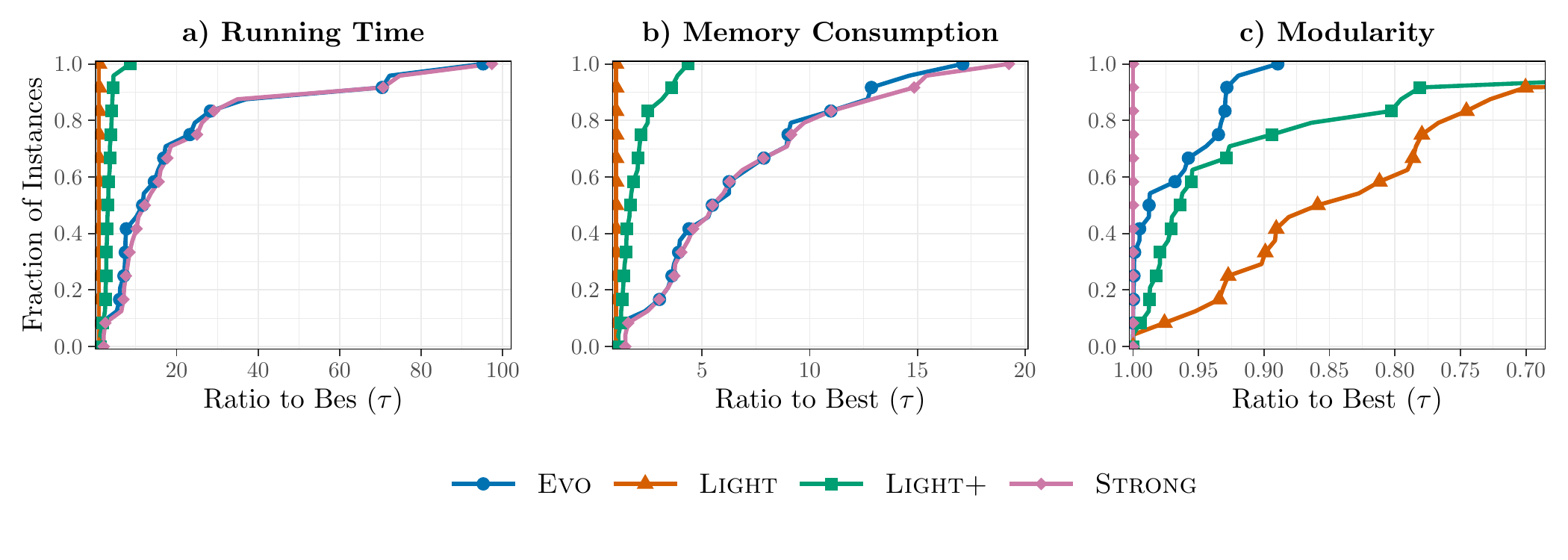}
		\caption{\textbf{Comparison of \algname~modes using performance profiles.} Each plot shows the fraction of instances (y-axis) for which an algorithm achieves a given ratio ($\tau$) to the best-performing method (x-axis). Note: in plot (c), the x-axis is decreasing since higher modularity is better. 
		}
		\label{plot:pp}
	\end{figure}
Experimental comparison of our four modes highlights their distinct applications based on computational constraints and solution quality priorities. Their trade-offs in solution quality and resource usage are illustrated in the performance profiles in Figure~\ref{plot:pp}.
\algname-Light is the fastest and most lightweight variant across all instances but achieves the lowest solution quality, as shown in Figure~\ref{plot:pp}, making it suitable for efficiency-prioritized applications.
We observe that re-streaming significantly improves clustering quality, at the cost of minimal runtime and memory. On average across all instances, \algname-Light+ increases solution quality by 15.7\% over \algname-Light while being 3.3$\times$ slower and using 1.9$\times$ more memory. Memetic clustering also enhances solution quality, with \algname-Evo achieving, on average across all instances, a 27.3\% improvement over \algname-Light, though at an increased computational cost, being 12.5$\times$ slower and using 5.5$\times$ more memory. 
\algname-Strong, which integrates both refinement strategies, delivers the highest solution quality on all instances (Figure~\ref{plot:pp}(c)), offering 31.5\% improvement over \algname-Light, while taking 14.2$\times$ more time and using 5.8$\times$ more memory on average across all instances. 
In absolute terms, \algname-Strong still has reasonable memory consumption; on our largest test instance, \texttt{uk-2007-05}, which has 105 million nodes and 3.3 billion edges, \algname-Strong requires only 2.18GB peak memory.   
Although our experimental results show that memetic clustering incurs a high running time cost, its duration is tunable and can be adjusted to reduce running time. \algname-Strong is thus ideal for applications where quality takes precedence over efficiency. \algname-Light+ provides a balanced alternative: as shown in Figure~\ref{plot:pp}, it achieves solution quality within about 10\% of \algname-Strong in approximately 70\% of instances while maintaining a significantly lower computational cost. 
\begin{tcolorbox}
	\textbf{Observation 1.} \algname-Light is the most efficient in runtime and memory but has the lowest quality.
	\algname-Light+ balances performance, providing 15.7\% quality improvement over \algname-Light while being 3.3$\times$ slower and using 1.9$\times$ more memory on average.
	\algname-Strong achieves the highest quality, improving by 31.5\% over \algname-Light on average, at the highest computational cost.
\end{tcolorbox}
\subsection{Solution Quality Compared to Existing Approaches}
In Table~\ref{tab:modularity_scores}, we present our experimental results for benchmarking \algname~against \textsc{Hollocou}, \textsc{louvain} and \textsc{VieClus}. Our results confirm that all \algname~modes outperform \textsc{Hollocou}, demonstrating its superior solution quality: on average across all instances, our lightest mode \algname-Light produces 89.8\% better solution quality than \textsc{Hollocou}, while \algname-Strong achieves a 149.5\% improvement over \textsc{Hollocou}, setting a new benchmark for streaming clustering. Notably, on the \texttt{com-friendster} network, \algname-Strong achieves a modularity score more than 16$\times$ better than that of \textsc{Hollocou}. 
\hbox{\textsc{Hollocou} underperforms in our study because its clustering decisions are} highly sensitive to the order in which edges are streamed -- solution quality improves when intra-cluster edges are streamed early, a condition that cannot be guaranteed. In the official implementation, they load the entire edge set into memory, randomize the order of edges to try to get intra-cluster edges to arrive earlier, and then process them sequentially.
We modified \textsc{Hollocou} to read/stream edges from the disk in the order in which they appear in the input data set (to ensure consistency across all algorithms) and process them in a streaming setting. 
Note, however, that in our experiments, the unmodified implementation -- which stores all edges in memory -- does not improve solution quality. However, it consumes, on average, 7$\times$ more memory than our modified streaming version. For example, on \texttt{com-friendster}, the official implementation required 27GB of memory while our streaming one used 0.98GB with comparable solution quality.  
Additionally, \textsc{Hollocou}'s reliance on the critical parameter \(v_{\max}\) poses a tuning challenge, as there is no principled method for selecting an optimal value for a given graph.

Compared to in-memory algorithms \textsc{louvain} and \textsc{VieClus}, \algname-Strong achieves 96.8\% of \textsc{Louvain}'s solution quality and 96.5\% of \textsc{VieClus}’ solution quality on average, demonstrating its capability to leverage partial global information through the quotient graph model, and the effectiveness of re-streaming with local search.

\begin{tcolorbox}
	\textbf{Observation 2.} \algname~surpasses \textsc{Hollocou} in solution quality in all configurations, offering improvements between 89.8\%(\algname-Light) to 149.5\%(\algname-Strong) on average. To answer \textbf{RQ2}, \algname~achieves new state-of-the-art \hbox{modularity} optimization in streaming graph clustering while achieving over 96\% of the solution quality of in-memory clustering algorithms \textsc{Louvain} and \textsc{VieClus}. 
\end{tcolorbox}
\begin{table}[t]
	\centering
	\caption{\textbf{Modularity Comparison.} We compare modularity scores achieved by our proposed algorithms against competing approaches across various graph instances. The table includes graph type and size (nodes, edges) to contextualize performance variations. \textsc{Hollocou} is a direct competitor streaming algorithm, while \textsc{Louvain} and \textsc{VieClus} are in-memory clustering algorithms. Missing instances (``-'') indicate failed runs due to exceeding the maximum amount of memory on the machine. Results demonstrate that all our proposed configurations outperform \textsc{Hollocou}.}
	\label{tab:modularity_scores}
	\resizebox{\textwidth}{!}{%
		\begin{tabular}{|lll|cccc|c|c|c|}
			\toprule
			\textsc{Graph} & \textsc{Type} & \textsc{Size: n,m}
			& \textsc{Light} & \textsc{Light+} & \textsc{Evo} & \textsc{Strong}
			& \textsc{Hollocou} & \textsc{Louvain} & \textsc{VieClus} \\
			\midrule
			\rowcolor{shade1}
			circuit5m & Circuit & 5.56M, 26.98M
			& 0.1926 & 0.2920 & 0.8208 & \textbf{\textcolor{PineGreen}{\underline{0.8214}}}
			& 0.3707 & 0.8163 & 0.8211 \\
			
			\rowcolor{shade2}
			cit-Patents & Citation & 3.77M, 16.52M
			& 0.1839 & 0.4704 & 0.8054 & \textbf{\textcolor{PineGreen}{\underline{0.8320}}}
			& 0.1828 & 0.8330 & 0.8334 \\
			
			\rowcolor{shade1}
			coAuthorsCitseer & Citation & 434K, 16.04M
			& 0.7991 & 0.8208 & 0.8751 & \textbf{\textcolor{PineGreen}{\underline{0.8860}}}
			& 0.6694 & 0.9019 & 0.9060 \\
			
			\rowcolor{shade2}
			com-amazon & Social & 334K, 925K
			& 0.7637 & 0.7971 & 0.9109 & \textbf{\textcolor{PineGreen}{\underline{0.9227}}}
			& 0.6420 & 0.9315 & 0.9345 \\
			
			\rowcolor{shade1}
			com-dblp & Social & 317K, 1.04M
			& 0.7249 & 0.7600 & 0.7455 & \textbf{\textcolor{PineGreen}{\underline{0.7760}}}
			& 0.5488 & 0.8282 & 0.8387 \\
			
			\rowcolor{shade2}
			com-friendster & Social & 65.6M, 1.81B
			& 0.5229 & 0.5839 & 0.5459 & \textbf{\textcolor{PineGreen}{\underline{0.5871}}}
			& 0.0334 & - & - \\
			
			\rowcolor{shade1}
			com-lj & Social & 3.99M, 34.68M
			& 0.6364 & 0.6933 & 0.6572 & \textbf{\textcolor{PineGreen}{\underline{0.7079}}}
			& 0.2056 & 0.7639 & 0.7676 \\
			
			\rowcolor{shade2}
			com-youtube & Social & 1.13M, 2.98M
			& 0.6426 & 0.6815 & 0.6610 & \textbf{\textcolor{PineGreen}{\underline{0.6902}}}
			& 0.2427 & 0.7233 & 0.7331 \\
			
			\rowcolor{shade1}
			com-orkut & Social & 3.07M, 117M
			& 0.5572 & 0.6248 & 0.5971 & \textbf{\textcolor{PineGreen}{\underline{0.6324}}}
			& 0.0792 & 0.6698 & 0.6737 \\
			
			\rowcolor{shade2}
			eu-2005 & Web & 862K, 16.13M
			& 0.7364 & 0.8895 & 0.9204 & \textbf{\textcolor{PineGreen}{\underline{0.9316}}}
			& 0.2683 & 0.9398 & 0.9414 \\
			
			\rowcolor{shade1}
			fullchip & Circuit & 2.98M, 11.81M
			& 0.4020 & 0.4482 & 0.5279 & \textbf{\textcolor{PineGreen}{\underline{0.5740}}}
			& 0.3569 & 0.5967 & 0.5994 \\
			
			\rowcolor{shade2}
			it-2004 & Web & 41.29M, 1.02B
			& 0.7433 & 0.9405 & 0.9646 & \textbf{\textcolor{PineGreen}{\underline{0.9693}}}
			& 0.2387 & 0.9762 & 0.9762 \\
			
			\rowcolor{shade1}
			italy-osm & Road & 6.68M, 7.01M
			& 0.9500 & 0.9601 & \textbf{\textcolor{PineGreen}{\underline{0.9976}}} & \textbf{\textcolor{PineGreen}{\underline{0.9976}}}
			& 0.9553 & 0.9980 & 0.9980 \\
			
			\rowcolor{shade2}
			rgg\_n26 & Rand. Geo. & 67.1M, 574.55M
			& 0.9671 & 0.9733 & 0.9897 & \textbf{\textcolor{PineGreen}{\underline{0.9909}}}
			& 0.8044 & 0.9956 & 0.9956 \\
			
			\rowcolor{shade1}
			rhg2b & Rand. Hyp. & 100M, 1.99B
			& 0.9920 & 0.9921 & \textbf{\textcolor{PineGreen}{\underline{0.9921}}} & \textbf{\textcolor{PineGreen}{\underline{0.9921}}}
			& 0.6757 & - & - \\
			
			\rowcolor{shade2}
			great-britain & Road & 7.73M, 8.15M
			& 0.9248 & 0.9264 & 0.9972 & \textbf{\textcolor{PineGreen}{\underline{0.9973}}}
			& 0.9344 & 0.9976 & 0.9977 \\
			
			\rowcolor{shade1}
			roadnet-ca & Road & 1.96M, 2.76M
			& 0.7802 & 0.7967 & 0.9919 & \textbf{\textcolor{PineGreen}{\underline{0.9923}}}
			& 0.8376 & 0.9928 & 0.9935 \\
			
			\rowcolor{shade2}
			roadnet-pa & Road & 1.08M, 1.54M
			& 0.7712 & 0.7869 & 0.9889 & \textbf{\textcolor{PineGreen}{\underline{0.9894}}}
			& 0.8333 & 0.9901 & 0.9910 \\
			
			\rowcolor{shade1}
			sk-2005 & Web & 50.63M, 1.81B
			& 0.7241 & 0.9433 & 0.9646 & \textbf{\textcolor{PineGreen}{\underline{0.9714}}}
			& 0.1450 & - & - \\
			
			\rowcolor{shade2}
			soc-flixster & Social & 2.52M, 7.92M
			& 0.4944 & 0.5145 & 0.5368 & \textbf{\textcolor{PineGreen}{\underline{0.5755}}}
			& 0.1539 & 0.6111 & 0.6191 \\
			
			\rowcolor{shade1}
			soc-lj & Social & 4.84M, 42.85M
			& 0.6416 & 0.6937 & 0.6689 & \textbf{\textcolor{PineGreen}{\underline{0.7196}}}
			& 0.2074 & 0.7626 & 0.7660 \\
			
			\rowcolor{shade2}
			in-2004 & Web & 1.38M, 13.59M
			& 0.7118 & 0.9351 & 0.9780 & \textbf{\textcolor{PineGreen}{\underline{0.9787}}}
			& 0.4468 & 0.9803 & 0.9805 \\
			
			\rowcolor{shade1}
			uk-2007-05 & Web & 105.89M, 3.30B
			& 0.7135 & 0.8675 & 0.8216 & \textbf{\textcolor{PineGreen}{\underline{0.8789}}}
			& 0.1838 & - & - \\
			
			\rowcolor{shade2}
			webbase-2001 & Web & 118.14M, 854.80M
			& 0.6341 & 0.7872 & 0.7194 & \textbf{\textcolor{PineGreen}{\underline{0.8088}}}
			& 0.5217 & 0.9822 & 0.9824 \\
			\bottomrule
			\textsc{GeoMean} & - & -
			& 0.6302 & 0.7292 & 0.8019 & \textbf{\textcolor{PineGreen}{\underline{0.8285}}}
			& 0.3321 & N/A & N/A \\
			\bottomrule
		\end{tabular}
	}
\end{table}

\subsection{Resource Consumption Compared to Existing Approaches}
\label{subsec:rq3}	
Beyond solution quality, we evaluate \algname’s computational efficiency to answer \textbf{RQ3}. In Figure~\ref{plot:boxplotstream} and Figure~\ref{plot:boxplot}, we present box plots showing the distribution of running time and memory consumption for the various algorithms. Figure~\ref{plot:boxplotstream} benchmarks the streaming algorithms across all graph instances; Figure~\ref{plot:boxplot} depicts all baseline algorithms, excluding, for all algorithms, the graph instances for which \textsc{VieClus} and \textsc{Louvain} failed due to their inability to process larger networks without exceeding memory available on the machine. We observe that \algname-Light is both faster and uses less memory than \textsc{Hollocou} while delivering higher solution quality (Table~\ref{tab:modularity_scores}), as seen in Figure~\ref{plot:boxplotstream}, which shows that the median running time and memory consumption across all instances for \algname-Light is lower than that of \textsc{Hollocou}. As a result, our algorithm outperforms the state-of-the-art streaming approach across all key metrics of efficiency and quality. On average across all instances, \textsc{Hollocou} is 2.6$\times$ slower and consumes 1.7$\times$ more memory than \algname-Light. \algname-Light+ strikes a strong balance, improving solution quality over \textsc{Hollocou} by 119.6\% (shown in Table~\ref{tab:modularity_scores}), with comparable memory consumption (1.1$\times$ more) and running times (1.3$\times$ slower), on average across all instances.
\begin{figure}[t]
	\centering
                \vspace*{-.5cm}
		\includegraphics[width=1.0\textwidth]{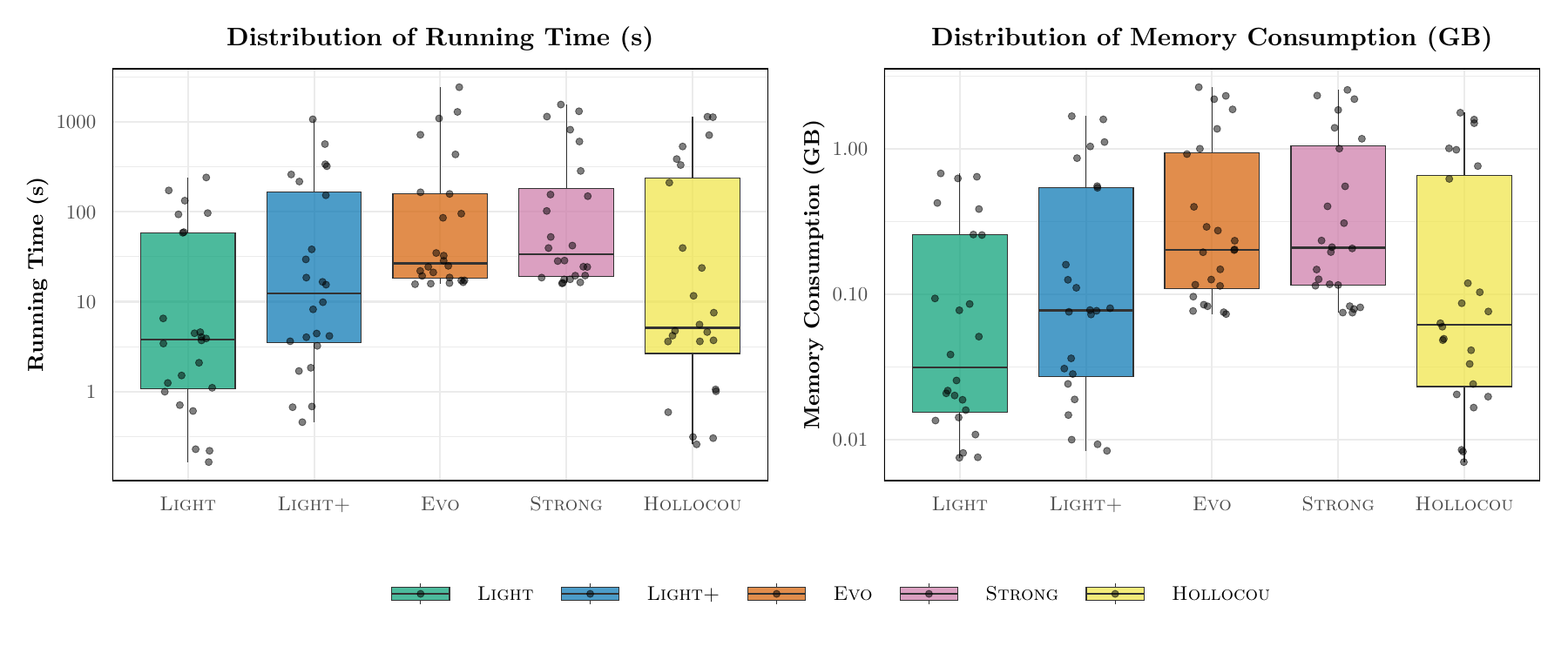}
		\caption{Comparison of \algname~modes with \textsc{Hollocou} in terms of running time and memory consumption. Box plots depict the distribution of running time (left) and memory consumption (right) across all instances. Note the logarithmic scale on the y-axis. All test instances are included.}
		\label{plot:boxplotstream}
                \vspace*{-.25cm}
	\end{figure}
\begin{figure}[h!]
	\centering
		\includegraphics[width=1.0\textwidth]{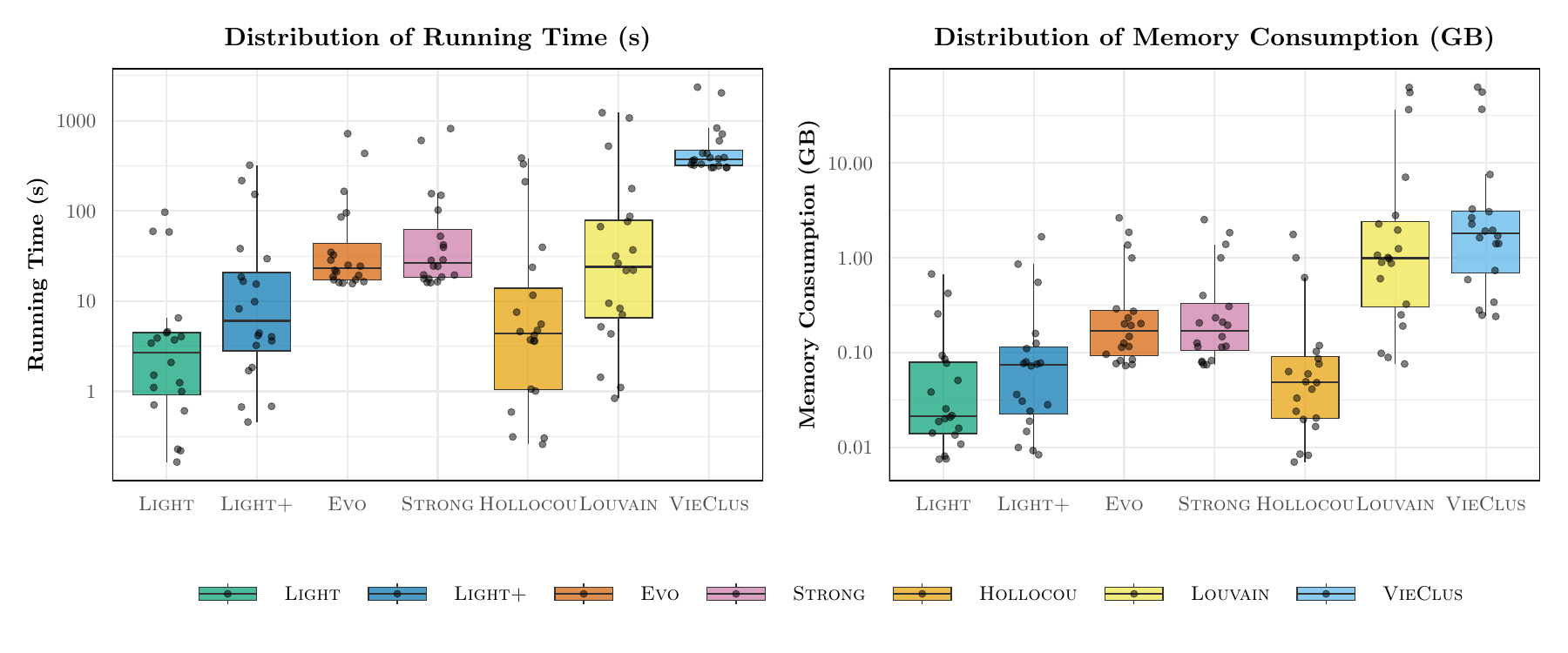}
		\caption{Comparison of \algname~modes with \textsc{Hollocou}, \textsc{Louvain} and \textsc{VieClus} in terms of running time and memory consumption. Box plots depict the distribution of running time (left) and memory consumption (right). Note the logarithmic scale on the y-axis. Runtime for \textsc{VieClus} is primarily influenced by the tunable preset for evolutionary rounds, which is set to five minutes in our experiments. Instances where \textsc{Louvain} and \textsc{VieClus} failed are excluded for all algorithms.}

                \vspace*{-.125cm}
		\label{plot:boxplot}
\end{figure}

Compared to \textsc{Louvain} and \textsc{VieClus}, all modes of our algorithm consume less memory, as depicted in Figure~\ref{plot:boxplot}. \algname-Strong requires, on average, only 18.3\% of \textsc{Louvain}'s memory consumption and only 10.8\% of \textsc{VieClus}'s, while achieving over 96\% of the solution quality of both in-memory algorithms, on average. \algname-Light+ further optimizes this trade-off, retaining much of \algname-Strong’s quality benefits at a significantly reduced computational cost: on average, it is 55.8$\times$ faster than \textsc{VieClus}, uses only 3.0\% of \textsc{VieClus}’s memory, and retains 81.1\% of its solution quality; relative to \textsc{Louvain}, it runs 3.06$\times$ faster, consumes  5.0\% of \textsc{Louvain}’s memory, and still achieves 83.2\% of its quality, on average.
\begin{tcolorbox}
	\textbf{Observation 3.} \algname-Light is 2.6$\times$ faster and uses two-thirds of \textsc{Hollocou}'s memory requirement while offering 89.8\% better quality. \algname-Light+ maintains memory and runtime efficiency while more than doubling solution quality over \textsc{Hollocou}. \algname-Strong utilizes less than a fifth of the memory required by \textsc{Louvain} and \textsc{VieClus} with comparable solution quality.
	
\end{tcolorbox}

\subsection{Ground-Truth Community Recovery}
To further assess the effectiveness of our algorithm in practical settings and answer \textbf{RQ4}, we analyzed its performance on graphs with ground-truth communities using Normalized Mutual Information (NMI)~\cite{Lancichinetti_2009} as a measure of agreement between the predicted clusters and the ground-truth clusters. Our results confirm that \algname-Light outperforms \textsc{Hollocou} in ground-truth recovery, achieving a 7.0\% improvement over \textsc{Hollocou}, on average across all instances. \algname-Strong further enhances this accuracy, achieving a 16.8\% improvement over \textsc{Hollocou}, on average across all instances.
\begin{table}[t]
	\centering
	\caption{\textbf{Ground Truth Comparisons}. We compare modularity and Normalized Mutual Information (NMI) scores against ground truth community structures across various benchmark graphs. Higher values indicate better alignment with known community structures.}
	\resizebox{\textwidth}{!}{%
		\begin{tabular}{|l@{}cc|cc|cc|cc|cc|}
			\toprule
						\textsc{Graph} & \multicolumn{2}{c|}{\textsc{Light}} & \multicolumn{2}{c|}{\textsc{Light+}} & \multicolumn{2}{c|}{\textsc{Evo}} & \multicolumn{2}{c|}{\textsc{Strong}} & \multicolumn{2}{c}{\textsc{Hollocou}} \\
			& Modularity & NMI & Modularity & NMI & Modularity & NMI & Modularity & NMI & Modularity & NMI \\
			\midrule
			\rowcolor{shade1} Cora & 0.7158 & 0.3993 & 0.7434 & 0.3993 & 0.7820 & 0.4274 & \textbf{\textcolor{PineGreen}{\underline{0.7991}}} & \textbf{\textcolor{Mulberry}{\underline{0.4419}}} & 0.5372 & 0.3867 \\
			\rowcolor{shade2} Citeseer & 0.7926 & 0.3318 & 0.8108 & 0.3314 & 0.8781 & 0.3382 & \textbf{\textcolor{PineGreen}{\underline{0.8867}}} & \textbf{\textcolor{Mulberry}{\underline{0.3384}}} & 0.6730 & 0.3309 \\
			\rowcolor{shade1} AmazonCP & 0.5627 & 0.4397 & 0.6133 & 0.4679 & 0.5944 & 0.4507 & \textbf{\textcolor{PineGreen}{\underline{0.6231}}} & \textbf{\textcolor{Mulberry}{\underline{0.4784}}} & 0.0744 & 0.3404 \\
			\rowcolor{shade2} PubMed & 0.6388 & 0.1658 & 0.6775 & 0.1692 & 0.7301 & 0.1871 & \textbf{\textcolor{PineGreen}{\underline{0.7552}}} & \textbf{\textcolor{Mulberry}{\underline{0.1917}}} & 0.3283 & 0.1691 \\
						\midrule
			\textsc{GEOMEAN} & 0.6720 & 0.3135 & 0.7074 & 0.3199 & 0.7388 & 0.3323 & \textbf{\textcolor{PineGreen}{\underline{0.7599}}} & \textbf{\textcolor{Mulberry}{\underline{0.3422}}} & 0.3065 & 0.2930 \\
						\bottomrule
			\end{tabular}%
				}
			\label{tab:modularity_nmi}
		\end{table}

\section{Conclusion}
\label{sec:conclusion}
To address the need for a high-quality graph clustering algorithm with low memory overhead, we propose \algname, a graph \textbf{Clu}stering algorithm in a \textbf{St}reaming setting with multi-stage refinement using \textbf{R}e-streaming and \textbf{E}volutionary heuristics. \algname~processes the graph in a node stream, dynamically constructing a quotient graph to efficiently refine clustering and optimize modularity using global information. Through memetic optimization and re-streaming with local search, the algorithm iteratively adjusts the clustering until modularity converges to the local optimum. Our results establish the superiority of \algname~over the state-of-the-art streaming clustering approach, \textsc{Hollocou}, in solution quality, ground-truth community recovery, runtime and memory efficiency. \algname-Strong sets a new benchmark, delivering a 150\% improvement in solution quality over \textsc{Hollocou} on average across all instances, making it the best-performing streaming clustering method. Even our lightest configuration, \algname-Light significantly outperforms \textsc{Hollocou}, improving solution quality by 90\% while being faster and using less memory. Additionally, our algorithm bridges the solution quality gap between streaming and in-memory clustering algorithms, achieving comparable solution quality to in-memory algorithms, while requiring less than a fifth of their memory consumption, on average. Notably, \algname~improves ground-truth recovery over \textsc{Hollocou} by up to 17\% on average, making it the best-performing streaming algorithm for accurately recovering known community structures. These outcomes highlight the considerable potential of our algorithm, as well as the versatility of our configurations, positioning it as a dynamic and promising tool for high-quality clustering even in resource-constrained settings. 
\vfill

\bibliography{arxiv_clean}
\vfill
\appendix
\section{Appendix: Modularity Equivalence of Quotient Graph}

\begin{theorem}
	\label{thm:modularity_invariance}
	Let $G = (V, E)$ be an undirected graph with edge weights $w: E \to \mathbb{R}_{>0}$, and let $\mathcal{C} = \{C_1, C_2, \dots, C_k\}$ be a partition of $V$ into clusters. The modularity of $\mathcal{C}$ in $G$ is:
	\begin{equation}
		Q_G(\mathcal{C}) = \frac{1}{m} \sum_{C_i \in \mathcal{C}} \left( K_{C_i \to C_i} - \frac{\text{vol}(C_i)^2}{2m} \right),
	\end{equation}
	where:
	\begin{itemize}
		\item $m = \sum_{(u,v) \in E} w(u,v)$ is the total edge weight,
		\item $K_{C_i \to C_i} = \sum_{u, v \in C_i, (u,v) \in E} w(u,v)$ is the total intra-cluster edge weight,
		\item $\text{vol}(C_i) = \sum_{v \in C_i} d_w(v)$ is the volume of $C_i$, where $d_w(v) = \sum_{u \in N(v)} w(v,u)$ is the weighted degree.
	\end{itemize}
	
	Consider the quotient graph $G_Q = (V_Q, E_Q)$, where each cluster $C_i$ in $G$ is contracted into a supernode $v'_i$, with edge weights:
	\begin{gather}
		w'(v'_i, v'_j) = K_{C_i \to C_j} = \sum_{u \in C_i, v \in C_j, (u,v) \in E} w(u,v),
	\end{gather}
	and self-loop weights $w'(v'_i, v'_i) = K_{C_i \to C_i}.$ Then:
	\begin{enumerate}
		\item Define the clustering $\mathcal{C}' = \{ \{ v'_i \} \mid C_i \in \mathcal{C} \}$ in $G_Q$. The modularity of the clustering $\mathcal{C}'$ of the quotient graph $G_Q$ satisfies $Q_{G_Q}(\mathcal{C}') = Q_G(\mathcal{C}).$
		\item Given any clustering $\mathcal{C}'$ in $G_Q$, its modularity is preserved when expanded to $G$, i.e., if $\hat{\mathcal{C}}$ is the corresponding clustering in $G$, then $Q_G(\hat{\mathcal{C}}) = Q_{G_Q}(\mathcal{C}')$.
	\end{enumerate}
\end{theorem}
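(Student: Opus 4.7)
The plan is to treat the theorem as a bookkeeping exercise, showing that the three quantities appearing in the modularity formula—total edge weight $m$, per-cluster intra-edge weight $K_{C \to C}$, and per-cluster volume $\text{vol}(C)$—are each preserved under the contraction $G \mapsto G_Q$. Once these three invariants are established, the modularity formulas in $G$ and $G_Q$ agree term-by-term after the straightforward substitution. I would prove Part 1 (singleton clustering of $G_Q$) first, and then obtain Part 2 almost for free by the same accounting applied to unions of supernodes.

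For Part 1, I would proceed in three steps. First, I would show that the total edge weight $m_Q$ of $G_Q$ equals $m$ by partitioning the edges of $G$ into intra-cluster edges (which are aggregated into the self-loop $w'(v'_i, v'_i) = K_{C_i \to C_i}$) and inter-cluster edges (which are aggregated into $w'(v'_i, v'_j) = K_{C_i \to C_j}$ for $i \neq j$), then summing. Second, I would check that the weighted degree of the supernode $v'_i$ in $G_Q$ coincides with $\text{vol}(C_i)$ in $G$: this is where the convention $w'(v'_i, v'_i) = K_{C_i \to C_i}$ with self-loops contributing twice to the degree is crucial, because each intra-cluster edge $(u,v) \in E$ also contributes $w(u,v)$ to both $d_w(u)$ and $d_w(v)$, hence $2 w(u,v)$ to $\text{vol}(C_i)$; inter-cluster edges from $C_i$ contribute once to both sides. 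Third, the intra-cluster weight of the singleton cluster $\{v'_i\}$ in $G_Q$ is simply $w'(v'_i, v'_i) = K_{C_i \to C_i}$, so substituting into Equation~(\ref{eq:modularity}) for $G_Q$ recovers the same expression as for $G$.

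For Part 2, given a clustering $\mathcal{C}' = \{D_1, \dots, D_\ell\}$ of $V_Q$ and its expansion $\hat{\mathcal{C}} = \{\hat{D}_1, \dots, \hat{D}_\ell\}$ with $\hat{D}_k = \bigcup_{v'_i \in D_k} C_i$, I would show $\text{vol}(D_k) = \text{vol}(\hat{D}_k)$ by summing the per-supernode identity from Part 1 over $v'_i \in D_k$. Then I would show $K_{D_k \to D_k} = K_{\hat{D}_k \to \hat{D}_k}$ by classifying each edge of $G$ with both endpoints in $\hat{D}_k$ as either (a) intra-cluster within some $C_i \subseteq \hat{D}_k$, contributing to the self-loop $w'(v'_i, v'_i)$, or (b) inter-cluster between two $C_i, C_j \subseteq \hat{D}_k$, contributing to the quotient edge $w'(v'_i, v'_j)$; both cases are captured in $K_{D_k \to D_k}$ with the same multiplicity. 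Combined with $m_Q = m$, the two modularity expressions match term-by-term.

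The main obstacle is disciplined handling of the factor-of-two conventions surrounding self-loops. The definition $w'(v'_i, v'_i) = K_{C_i \to C_i}$ combined with the standard rule that a self-loop of weight $w$ contributes $2w$ to the weighted degree must align exactly with the fact that each undirected intra-cluster edge contributes $2 w(u,v)$ to the volume in $G$; a single sign or factor-of-two error here would break the volume identity and propagate through the entire argument. Once the convention is fixed consistently and the three invariants $m_Q = m$, $\text{vol}(D_k) = \text{vol}(\hat{D}_k)$, $K_{D_k \to D_k} = K_{\hat{D}_k \to \hat{D}_k}$ are verified, the remainder is a direct substitution into the definition of modularity.
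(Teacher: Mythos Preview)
Your proposal is correct and follows essentially the same approach as the paper's proof: establish that $m$, per-cluster intra-edge weight, and per-cluster volume are preserved under contraction, then substitute term-by-term into the modularity formula for both the singleton clustering (Part~1) and arbitrary clusterings (Part~2). Your treatment is in fact more explicit than the paper's---which dispatches the volume and intra-weight identities with ``by construction''---particularly in your careful handling of the factor-of-two self-loop convention, which is exactly the point the paper's figure caption and surrounding text flag but its appendix proof glosses over.
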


\begin{proof}
	The total edge weight in $G_Q$ remains equivalent by construction:
	\begin{gather}
		m' = \sum_{(v'_i, v'_j) \in E_Q} w'(v'_i, v'_j) = \sum_{(u,v) \in E} w(u,v) = m.
	\end{gather}
	Since we defined $\mathcal{C}' = \{ \{ v'_i \} \mid C_i \in \mathcal{C} \}$ in $G_Q$, we get that the modularity of $\mathcal{C}'$ in $G_Q$ is:
	\begin{gather}
		Q_{G_Q}(\mathcal{C}') = \frac{1}{m} \sum_{v'_i \in V_Q} \left( w'(v'_i, v'_i) - \frac{\text{vol}(v'_i)^2}{2m} \right).
	\end{gather}
	where, by construction,
	$w'(v'_i, v'_i) = K_{C_i \to C_i}$ and $\text{vol}(v'_i) = \text{vol}(C_i)$ due to the weighted self-loops.
	Thus, we get:
	\begin{align}
		Q_{G_Q}(\mathcal{C}') &= \frac{1}{m} \sum_{C_i \in \mathcal{C}} \left( K_{C_i \to C_i} - \frac{\text{vol}(C_i)^2}{2m} \right) = Q_G(\mathcal{C}).
	\end{align}
	\(\therefore\) Modularity is invariant under contraction.
	
	For the reverse direction, consider a clustering $\mathcal{C}'$ in $G_Q$. The expansion of $\mathcal{C}'$ into $G$ defines a clustering $\hat{\mathcal{C}}$, where each supernode $v'_i \in C'_j$ is replaced by its original cluster $C_i$, forming $\hat{\mathcal{C}}_j = \bigcup_{v'_i \in C'_j} C_i$. The modularity of $\hat{\mathcal{C}}$ in $G$ is:
	\begin{gather}
		Q_G(\hat{\mathcal{C}}) = \frac{1}{m} \sum_{\hat{C}_j \in \hat{\mathcal{C}}} \left( K_{\hat{C}_j \to \hat{C}_j} - \frac{\text{vol}(\hat{C}_j)^2}{2m} \right).
	\end{gather}
	By construction, $K_{\hat{C}_j \to \hat{C}_j} = K_{C'_j \to C'_j}$ and $\text{vol}(\hat{C}_j) = \text{vol}(C'_j)$ $\implies$ $Q_G(\hat{\mathcal{C}}) = Q_{G_Q}(\mathcal{C}').$
	
	\(\therefore\) Modularity is invariant under expansion.
\end{proof}

\end{document}